\theoremstyle{plain}
\newtheorem{proposition}{Proposition}
\newcommand{\E}{\mathbb{E}}
\newcommand{\bX}{\mathbf{X}}
\newcommand{\bM}{\mathbf{M}}
\newcommand{\bm}{\mathbf{m}}
\title{Integrating Earth Observation Data into Causal Inference: Challenges and Opportunities}
\author{%
  Connor T. Jerzak \\
  Department of Government \\
  University of Texas at Austin \\
  Email: \texttt{connor.jerzak@austin.utexas.edu} \\
  Website: \texttt{ConnorJerzak.com} \\
   \And
    Fredrik Johansson  \\
   Data Science and AI Division \\
   Chalmers University of Technology \\
   Email: \texttt{fredrik.johansson@chalmers.se} \\
   Website: \texttt{fredjo.com}
   \AND
   Adel Daoud \\
  Institute for Analytical Sociology\\
  Linköping University \\
  Email: \texttt{adel.daoud@liu.se} \\
  Website: \texttt{AdelDaoud.se} \\ 
  AI and Global Development Lab: \texttt{global-lab.ai}
}
\renewcommand{\S}{Section\;}
\begin{document}
\maketitle

\begin{abstract}
Observational studies require adjustment for confounding factors that are correlated with both the treatment and outcome. In the setting where the observed variables are tabular quantities such as average income in a neighborhood, tools have been developed for addressing such confounding. However, in many parts of the developing world, features about local communities may be scarce. In this context, satellite imagery can play an important role, serving as a proxy for the confounding variables otherwise unobserved. In this paper, we study confounder adjustment in this non-tabular setting, where patterns or objects found in satellite images contribute to the confounder bias. Using the evaluation of anti-poverty aid programs in Africa as our running example, we formalize the challenge of performing causal adjustment with such unstructured data---what conditions are sufficient to identify causal effects, how to perform estimation, and how to quantify the ways in which certain aspects of the unstructured image object are most predictive of the treatment decision. Via simulation, we also explore the sensitivity of satellite image-based observational inference to image resolution and to misspecification of the image-associated confounder. Finally, we apply these tools in estimating the effect of anti-poverty interventions in African communities from satellite imagery.

{
\small 
\vspace{.10in}
\noindent {\bf Keywords: } Earth observation; Causal inference; Neighborhood dynamics 
 
\vspace{.05in}
\noindent {\bf Word count: } 12,207

\vspace{.05in}
\noindent {\it Note: } This work largely subsumes 
\begin{itemize}
\item[] Jerzak, Connor T., Fredrik Johansson, and Adel Daoud. ``Estimating Causal Effects Under Image Confounding Bias with an Application to Poverty in Africa.'' {\it arXiv preprint} \url{arXiv:2206.06410} (2022).
\end{itemize}

} 
\end{abstract}

\newpage 
\section{Introduction}\label{ss:intro}
\vspace{-0.5cm}
The causal revolution in the social sciences has entered a new phase where scholars are increasingly combining traditional tabular data with novel data sources \citep{daoud_statistical_2023, morgan2015counterfactuals,imai2022causal,pearl_causality_2009}. And, as a response to increasing data digitization and availability, a growing literature has emerged more recently that seeks to provide valid estimation of treatment effects in the presence of high-dimensional confounders, where the number of variables is large relative to the number of observations. \citep{li2021bounds,mozer2020,chernozhukov2018double,yoon2018ganite,wager2018estimation,shalit2017estimating,hill2011bayesian,schneeweiss2009high, Belloni2014, alexanderOverlapObservationalStudies2021}. Yet, merely adding more covariates describing each unit's context does not itself address concerns about non-random missingness within that confounder data, about the difficulty in obtaining information, about historical interventions, and about the lack or unreliability of data in the most economically disadvantaged places \citep{jerven2015statistical,dinku2019challenges}---precisely those in need of effective interventions. 

In this article, we argue that satellite data and remote sensing information provide an important resource for expanding the reach of causal inquiry in otherwise data-scarce environments. Satellite data is available for every corner of the globe. This breadth of information is available on a historical basis stretching back to the 1970s. Moreover, unlike a sizable proportion of datasets which are a single snapshot of a social system at a particular point in time, many earth observation satellites return to every place on earth every two weeks or even more frequently---providing 26 or more temporal slices per year. The temporally-resolved  information contained in satellite data has been shown to be associated with variables of social science importance often characterized as neighborhood features---features such as the development of transportation networks \citep{nagne2013transportation}, the degree of urbanness \citep{schneider2009new}, health and material conditions \citep{daoud2021using}, living standards \citep{jean2016combining,yeh2020using}, and a host of other neighborhood features \citep{sowmya2000modelling}. For these reasons, data from space-borne instruments may provide critical information for causal inference analyses, especially given the rapid proliferation of earth observation satellites from the hundreds into the thousands \cite{tatem2008fifty} and sub-100 cm resolution now widely available on the latest generation of satellites \citep{hallas2019mapping}.

Despite the potential offered by satellite images in causal inference, there is a lack of methodological guidance for causal estimation when confounding is induced by patterns or objects observed in an image  \citep{castroCausalityMattersMedical2020}. To help fill this need, we examine observational causal inference in the presence of confounding captured as latent image patterns. A story-based intuition for we have in mind is the following: an actor examines a neighborhood, looking for certain aspects of that neighborhood (such as the presence of poverty) to guide the choice of intervention, $T$. Those neighborhood aspects, $U$, are observed by the decision maker but unobserved by outside analysts. Some of that neighborhood-level information is also embedded in the satellite data representation ($\bM$) of that same neighborhood. Thus, observed image patterns indicate the existence of real-world objects that provide information about the confounding factors associated with both treatment and outcome. We may not directly observe the true latent variables about which the treatment decision was made, but we can readily observe, from earth observation resources, and adjust for inferred image patterns that correlate with the treatment, even if these patterns are difficult to adjust for directly \citep{voigtGlobalTrendsSatellitebased2016}.

Our focus in this paper on causal inference with earth observation data complements a social-science research trend, especially in sociology and political science, where scholars increasingly leverage visual data. For example, such visual data are used in qualitatively analyzing photos \citep{pauwels_visual_2010,ohara_participant_2019}, estimating image similarities \citep{zhang_image_2022}, and approximating the number of demonstrating people from news photos \citep{cruz_unbiased_2021}. Recently, the use of visual data also includes video data for analyzing social processes, such as police violence \citep{nassauer_video_2021}. For larger visual data and in a quantitative design, scholars have to train algorithms ``learning to see'' objects of interest \citep{torres_learning_2022}. Nonetheless, although these contributions are critical for research designs relying on image data, there is a need for deeper grounding in the causal inference literature \citep{daoud_statistical_2023, morgan2015counterfactuals,imai2022causal,pearl_causality_2009}, creating a knowledge gap about how to leverage images for causal inference. Our article contributes to filling that gap. 

In what follows, we first in \S\ref{ss:theory} describe several causal structures relevant to satellite image-based causal inference and discuss their implications for identification and estimation. We focus on under what conditions the image alone is sufficient to adjust for the confounding introduced by $U$. This holds, for example, when $U$ may be derived deterministically from $\bM$. An important special case of image pattern confounding occurs when decision-makers make choices based on the (translation-invariant) existence of a pattern in the image, which motivates adjustment techniques based on convolutional models developed in the machine learning community \citep{goodfellow_deep_2016}. We also analyze the complementary case where the image confounder is itself the cause of the image.

We next study in \S\ref{ss:experiments}  finite-sample estimation of average treatment effects in the fully identified case by conducting a simulation in which confounders are derived from the observed image. In this setting, we investigate the impact of model misspecification on estimates, as well as the role of image resolution---a key aspect of image data with no perfect analog in tabular, network, or text data. Finally, in \S\ref{sec:mu_empirical}, we demonstrate the use of the proposed estimation framework in an application in which we evaluate the effectiveness of international aid programs on neighborhood-level poverty by estimating treatment propensity using geo-referenced satellite data to proxy for confounding factors.

\section{Related Work and Contribution} 
Observational causal inference methods primarily evolved in the context of tabular covariates (e.g., see \citet{imbens2016causal}), where separate, human understandable features are used as covariates in adjusting for confounding via regression \citep{best2013sage}, weighting \cite{jung2020estimating}, or doubly robust \citep{funk2011doubly} methods. While tabular covariates are readily interpretable and can be tailored by researchers at the time of data collection, they face several limitations. For example, tabular covariates such as gender, income, and ethnicity are typically collected at the time of data collection by researchers. Thus, they cannot capture historical confounders (such as income pre-data collection) without additional effort. Moreover, they are subject to missingness or mismeasurement due to unit-level behavior: a unit may decide--or not---to answer a survey question or may mask their true opinions, both of which induce bias into the resulting causal analysis. 

The causal text analysis literature has recently developed interesting insight into how text can supplement tabular covariates in observational causal inference, emphasizing how text information can provide insight into why treatments were assigned \citep{grimmer2013text,egami2018make,keith2020text}, even if there is a risk posed by the open-ended quality of text that can lead to bias in observational analyses \cite{daoud2022conceptualizing_n}. While text does pose an opportunity for investigators, it is limited by availability pre-intervention: much text is gathered via web resources, so historical interventions can be difficult to model via text, not to mention the fact that some interventions might not have relevant text associated with them. While text is a promising new data source for causal inference, it is not available universally for all spatially defined interventions. 

Social network data has also been increasingly emphasized in the causal inference setting. The typical use of this network data is not to adjust for confounding bias but instead to model how units respond to the treatment status of social connections, primarily to estimate spillover effects \cite{vanderweele2013social}. Network data must be gathered for every unit about every other unit: this kind of data source scales non-linearly---with a polynomial growth in the number of possible connections as the number of units. As a consequence, network data has generally been difficult to obtain outside a few contexts such as trade relations, alliances, or online social networks, where social links are assumed to be online website links (e.g., \citet{lewis2012social}). Because of the computational complexity, the temporal resolution of network data used in social science research is generally refined at best on a yearly basis, as in international relations network analyses (see, for example, \citet{lebo2008dynamic}). 

In this context, satellite image data potentially addresses a number of the limitations of other data streams for observational causal inference (see Table~\ref{tab:DataComparison} for a summary). Satellite image data--- obtained from space-born instruments funded by NASA, the European Space Agency, and others---is high-dimensional and readily interpretable as a raw data source in that human observers can generally identify key features of the data, such as the presence of a city, forest, river, soil moisture content, and so forth. Satellite data are also available for all geo-referenced interventions---not just those about which a human observer decided to write a document. Moreover, while satellite data does have missingness due to cloud cover, experimental units cannot affect the availability of satellite images about their neighborhood, an inability that reduces the risk of bias due to systematic data missingness \citep{kenward2007multiple}. Finally, remote sensing data also has a temporal resolution of two weeks or better for many leading satellite image providers---meaning that, in principle, one can examine the evolution of a neighborhood with 26 or more data slices per year. 

\begin{table}[ht]
\centering
\renewcommand\arraystretch{1.5}
\caption{Comparing satellite imagery data with other data types in the context of observational causal inference. $*$ The European Space Agency's Sentinel earth observation satellites revisit locations at least once every five days; Landsat revisits locations around once every 14 days.}\label{tab:DataComparison}
\begin{tabular}[t]{
>{\raggedright}p{0.23\linewidth}
>{\raggedright}p{0.15\linewidth}
>{\raggedright}p{0.10\linewidth}
>{\raggedright\arraybackslash}p{0.15\linewidth}
>{\raggedright\arraybackslash}p{0.15\linewidth}
>{\raggedright\arraybackslash}p{0.35\linewidth}
}
\toprule\midrule 
& {\it Tabular} & {\it Text} & {\it Network} & {\it  Earth  Observation}
\\ \midrule
\underline{\it Origin \& Character} &  &  &  &
\\ Common origin & Researcher surveys/gov't data & Web resources & Online social network databases & Space-born instruments
\\ High-dimensional & Not usually & Yes & Yes & Yes
\\ Unstructured in raw form & No & Yes & Yes & Yes
\\ Readily interpretable as a raw data source & Yes & Yes & No & Yes
\\ Scaling with \# of units & Linear & Linear & Polynomial & Linear
\vspace{0.4cm}
\\ \underline{\it Availability} &  &  &  & 
\\ Available for all geo-referenced interventions & No & No & No & Yes
\\ Readily available pre-intervention & No & No & No & Yes
\\ Typical temporal resolution & Singular/1 year & Singular & Singular & 5-14 days$^*$
\\ Subject to missingness due to unit behavior & Yes & Yes & Yes & No
\\ \bottomrule
\end{tabular}
\end{table}%

The machine learning community has begun exploring the opportunities of images in the causal image context \citep{castroCausalityMattersMedical2020, reddyCANDLEImageDataseta, duAdversarialBalancingbasedRepresentation2021}. Related work has been done on estimating counterfactual outcomes under spatially defined counterfactual treatment strategies  \citep{papadogeorgou2020causal}, on accounting for spatial interdependence in causal effect estimation \citep{reich2021review}, and on balancing observed covariate representations using adversarial networks and image data  \citep{kallus2020deepmatch}. Other works address images as treatments \citep{kaddour2021causal}, counterfactual inference and interpretability \citep{pawlowskiDeepStructuralCausal2020, singlaExplanationProgressiveExaggeration2020}, and image-based treatment effect heterogeneity \citep{jejoda2022_hetero}. There are also several important works on how algorithms may discover causal structures from images in the causal discovery tradition \citep{chalupkaMultiLevelCauseEffectSystems2016,chalupkaUnsupervisedDiscoveryNino2016,chalupkaVisualCausalFeature2015,scholkopfCausalRepresentationLearning2021,yiCLEVRERCoLlisionEvents2020,dingDynamicVisualReasoning2021}. 

What we can add to this literature is twofold. First, we provide a systematic analysis on how to understand the latent confounding structure proxied for or induced by images in the causal inference setting, where latent objects in the image may affect treatment and outcome \citep{castroCausalityMattersMedical2020}. Developing techniques for causal inference under image pattern confounding could open up new avenues for observational studies \citep{pawlowskiDeepStructuralCausal2020, singlaExplanationProgressiveExaggeration2020, kaddour2021causal}. (See \S\ref{ss:proxy} for a more expansive connection developed with the proxy literature.)

Second, this analysis also specifically provides guidance for social scientists interested in using earth observation resources to perform observational causal inference in neighborhood or developing contexts. As mentioned, earth observation data provides a large amount of raw information about potential confounding variables of importance in social science research: we here analyze the promise and pitfalls of this emerging analytic pipeline. 

To conclude this section, we also note that this work on integrating satellite images into causal pipelines in data-poor environments also has policy and practical resonance. Policymakers may consult maps to evaluate where to allocate aid to villages that otherwise may remain poor \citep{holmgrenSatelliteRemoteSensing1998a,bediMorePrettyPicture2007}. For example, since 2000, policymakers frequently rely on raw satellite images to evaluate damage due to natural disasters or war \citep{voigtGlobalTrendsSatellitebased2016, burkeUsingSatelliteImagery2021, kinoScopingReviewUse2021}.  
Based on these images, they decide where to intervene in helping the poor \citep{borieMappingResilienceCity2019}. Thus, the use of satellite images in causal inference may help those in the policy world as well as evaluate anti-poverty programs, especially since areas experiencing poverty are also often affected by weak state capacity \citep{besley2010state}, meaning that the poorest places are often those about which we know least in the absence of remote sensing information. 

\section{Characterizing Challenges and Opportunities of Satellite Image-based Observational Inference }\label{ss:theory}
A major reason why satellite images can serve as a useful tool for observational causal inference is that the satellite images can proxy for confounders that explain why some places but not others received a treatment of some kind, such as an anti-poverty aid program. 

With this motivation in mind, we study the identification and estimation of the average treatment effect (ATE) of $T$ on a real-valued outcome of interest, $Y$, based on observational data (suppressing the unit subscripts of the various random variables except when important for expository purposes). With $Y(1)$ being the potential outcome~\citep{rubin2005causal} under intervention with $T=1$ and $Y(0)$ for $T=0$, 
$$
\text{ATE} = \E[Y(1) - Y(0)]~.
$$
The ATE can represent, for example, the difference in average wealth in villages after anti-poverty interventions and after no intervention, respectively. In that setting, historical interventions can be thought of as being determined by a decision-maker using information about neighborhoods that are proxied by a satellite image representation, $\bM$. 

Whether we think that the image patterns cause the confounder or the confounder dynamics cause the image pattern is of little consequence in practice: in the latter, the image is \textit{proxy}; in the former, the image is a  \textit{driver}; the same type of analysis will be done in either case \citep{pearlLinearModelsUseful2013} as described in the proxy literature \citep{kurokiMeasurementBiasEffect2014,louizosCausalEffectInference2017a}. To understand the estimation dynamics of the ATE from observational images and adjust for induced confounding bias, we analyze the data-generating process next.

\subsection{Baseline Model of Confounding Bias} 
To understand similarities and differences of observational causal inference in the tabular case and in the satellite image context, first consider the causal graph in Figure~\ref{fig:SimpleDag}. Note that we reintroduce unit-level subscripts in the Figure to emphasize some distinctions: $s$ denotes a scene (e.g., village), and $w$ and $h$ denote the width and height indices of a spatially resolved context like an image. This figure depicts classical confounding: a treatment of interest ($T_{swh}$), such as an anti-poverty intervention, is associated with factors (such as the presence of mineral extraction sites) that affect both the treatment and the outcome ($Y_{swh}$). Observed confounding variables are grouped in $X_{swh}$; unobserved confounders in $U_{swh}$.
\begin{figure}[t]
    \begin{subfigure}{0.48\textwidth}
        \centering
        \vspace{1em}
        \begin{center}
        \tikzstyle{main node}=[circle,draw,font=\sffamily\small\bfseries]
        \tikzstyle{sub node}=[circle,draw,dashed,font=\sffamily\small\bfseries]
        \begin{tikzpicture}[->,>=stealth',shorten >=1pt,auto,node distance=2cm,thick]
          
          \node[sub node] (1) {$U_{swh}$};
          \node[main node] (3) [below right of=1] {$T_{swh}$};
          \node[main node] (4) [right of=3] {$Y_{swh}$};
          \node[main node] (5) [right of=1] {$X_{swh}$};
    
          \path[every node/.style={font=\sffamily\small}]
          (1) edge node  {} (3) 
          (1) edge node  {} (4) 
          (3) edge node  {} (4)
          (5) edge node  {} (3) 
          (5) edge node  {} (4); 
        \end{tikzpicture}
      \end{center}
      \vspace{2.5em}
      \caption{Baseline causal diagram. }\label{fig:SimpleDag}
  \end{subfigure}
  \hfill
  \begin{subfigure}{0.48\textwidth}
        \centering
        \tikzstyle{main node}=[circle,draw,font=\sffamily\small\bfseries]
        \tikzstyle{sub node}=[circle,draw,dashed,font=\sffamily\small\bfseries]
        \begin{tikzpicture}[->,>=stealth',shorten >=1pt,auto,node distance=2cm,thick]
          
          \node[sub node] (1) {$U_{swh}$};
          \node[main node] (2) [below of=1] {$M_{sw'h'}$};
          \node[main node] (3) [below right of=1] {$T_{swh}$};
          \node[main node] (4) [right of=3] {$Y_{swh}$};
          \node[main node] (5) [right of=1] {$X_{swh}$};
          
          \node[rectangle,draw=gray, fit=(2),inner sep=2mm,label=below:{$w',h'\in \Pi_s(wh)$}] {};

          \path[every node/.style={font=\sffamily\small}]
          (1) edge node  {} (3) 
          (1) edge node  {} (4) 
          (2) edge node  {} (1) 
          (1) edge node  {} (2) 
          (3) edge node  {} (4)
          (5) edge node  {} (3) 
          (5) edge node  {} (4); 
        \end{tikzpicture}
        \caption{Diagram illustrating image-based confounding. }\label{fig:ConvDag}
    \end{subfigure}  
    \caption{Diagram representing variables associated with a scene $s$. In our running example, $U_{swh}$  represents unobserved confounders, $X_{swh}$ observed confounders, $T_{swh}$ treatment and $Y_{swh}$ the outcome, all at location $w,h$ in scene $s$. In the right-hand diagram, latent confounders $U_{swg}$ are determined by a neighborhood $\Pi_s(wh)$ of the location $h, w$ in the image $M$ representing scene $s$. The arrow between $U_{swh}$ and $M_{sw'h'}$ is bi-directional to indicate that we are agnostic about the direction of causality (whether the image ``causes'' the confounder or the confounder ``causes'' the image).}
\end{figure}
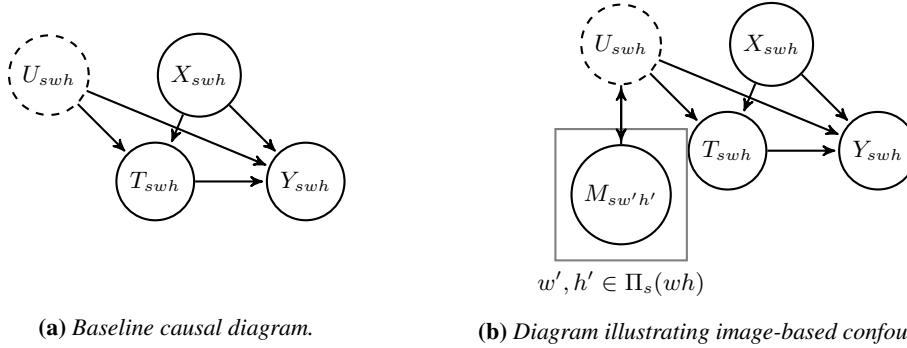
In the general, $s$ need not be spatially defined when working with non-satellite-derived images, like if the images were to be X-ray snapshots. In that case, the scene index would refer to a patient or body part. However, in the social science setting with satellite data, $s$ will usually refer to some neighborhood or the neighborhood context around an observational unit. 

\subsection{Pixel-based Image Confounding}
We now turn to the causal model in Figure~\ref{fig:SimpleDag}, which depicts the kind of confounding dealt with in much of observational research \citep{rosenbaum1983central}. We extend this model to describe image-based confounding. 

First, we define this model at the local level, where treatments are implemented at specific locations $w,h$ (in, for example, the precision agriculture context where treatments are applied to small sections of land; \citet{liaghat2010review}). 

To describe causal dependencies at the neighborhood level, we introduce the following notation. Let $\Pi_s(wh)\in \mathbb{N}^{2}$ denote the set of location indices involved in generating the confounder value from the neighborhood around $swh$. For example, if the confounder at $swh$ were generated using neighborhood information in a $z\times z$ region around $swh$, 
\begin{equation*}
{
\Pi_s(wh) =  
\big\{w-\lfloor z/2\rfloor,...,w+\lfloor z/2\rfloor\big\} 
\times 
\big\{h-\lfloor z/2\rfloor,...,h+\lfloor z/2\rfloor\big\}. 
}
\end{equation*}
For example, if $z=2$, $\Pi_s(wh) =  \{ w - 1, w, w+1\} \times \{ h - 1, h, h+1\}$, where the $\times$ symbol here denotes the Cartesian product capturing all ordered pairs of the left and right set. In other words, $\Pi_s(wh)$ simply characterizes the set of width and height indices centered around location $wh$ in scene $s$.

With this notation, we can illustrate the confounding structure induced by image-based confounding at the neighborhood level in Figure~\ref{fig:ConvDag}. Figure \ref{fig:ConvDag} is a formulation of spatial interdependence, as the image-information for indices in $\Pi_s(wh)$ affects the confounder $U_{swh}$. Conditional on the value of the confounder, the treatment decision for each unit is made. This confounding dynamic would be invoked if a decision maker who scans a scene looking for similarity of the neighborhood around $swh$ to some mental image, defined, for example, by an image filter $l$, and makes a decision on this basis. 

We illustrate this process on satellite image data from Landsat, a U.S. Geological Survey and NASA satellite (see \S\ref{ss:data} for details). We perform the illustration using a particular parametric model for the neighborhood-induced confounding. The parametric model used for illustration is convolutional: we let a single convolutional filter in the form of a diagonal matrix represent an image pattern used by a decision maker to determine the treatment probability, as depicted in Figure \ref{fig:conv}. After applying $f_l$ to the raw image shown in the right panel of Figure \ref{fig:conv}, we obtain the resulting image-derived confounder values. ``Applying the filter to the image'' here means calculating a similarity score at every place in the original image with the filter (the ``image pattern''). This score generates a new, latent image representing the similarity structure which, here, underlies the confounding dynamic. The similarity score is calculated by summing up the result of multiplying the filter with the relevant image pixels (similar to how the leading term of the covariance between $A$ and $B$ would be calculated by taking the average of the product of $A$ and $B$). This simple example shows how the presence of objects or patterns in images (as represented by the diagonal line here) can generate confounder values in the context of satellite-based observational inference. 

\begin{figure}[t]
    \centering
    \begin{subfigure}[t]{.3\textwidth}
        \centering
        \includegraphics[width=.8\textwidth]{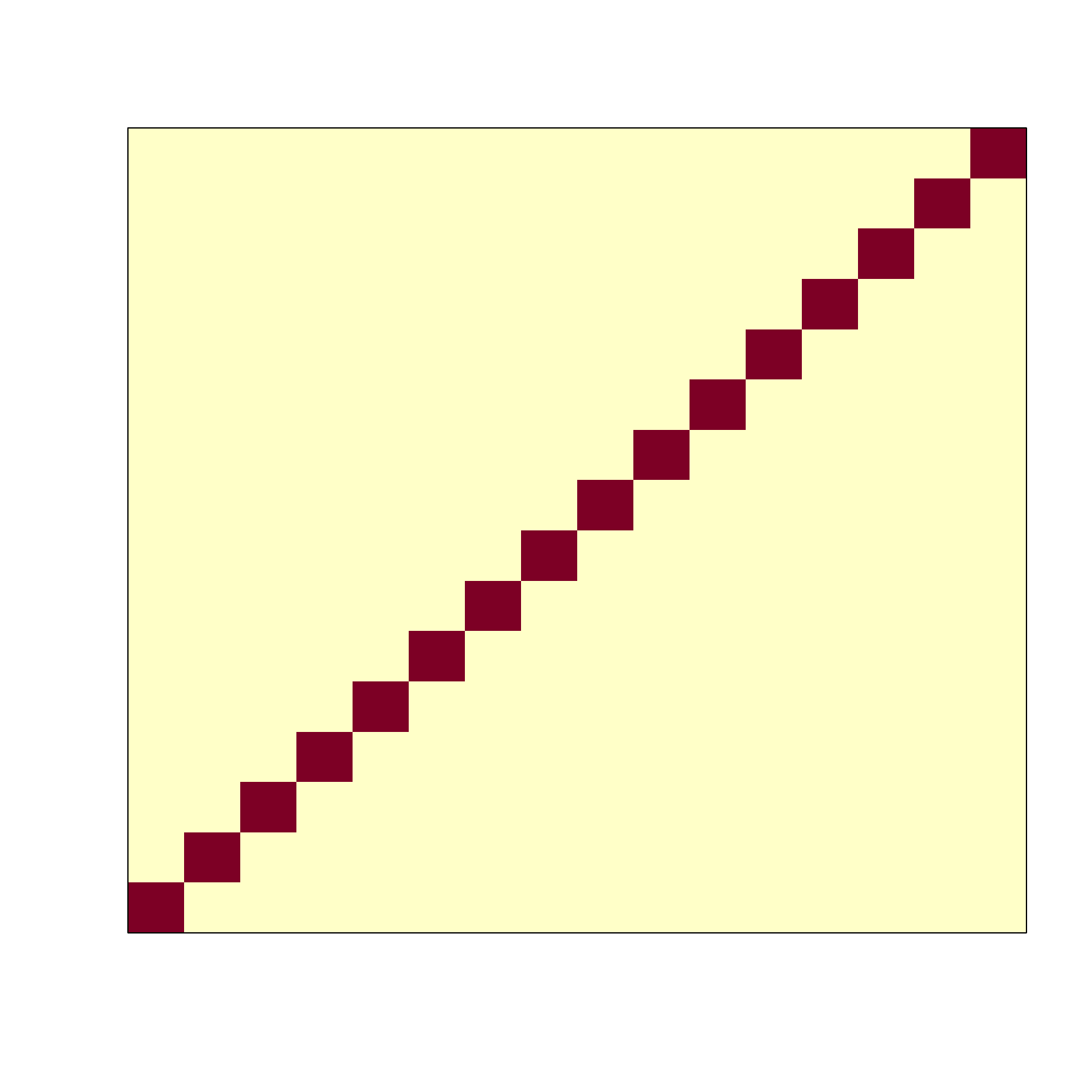}
        \vspace{-1em}
        \label{fig:convKern}
    \end{subfigure}
    \hfill
    \begin{subfigure}[t]{0.65\textwidth}
        \centering
         \includegraphics[width=.85\textwidth]{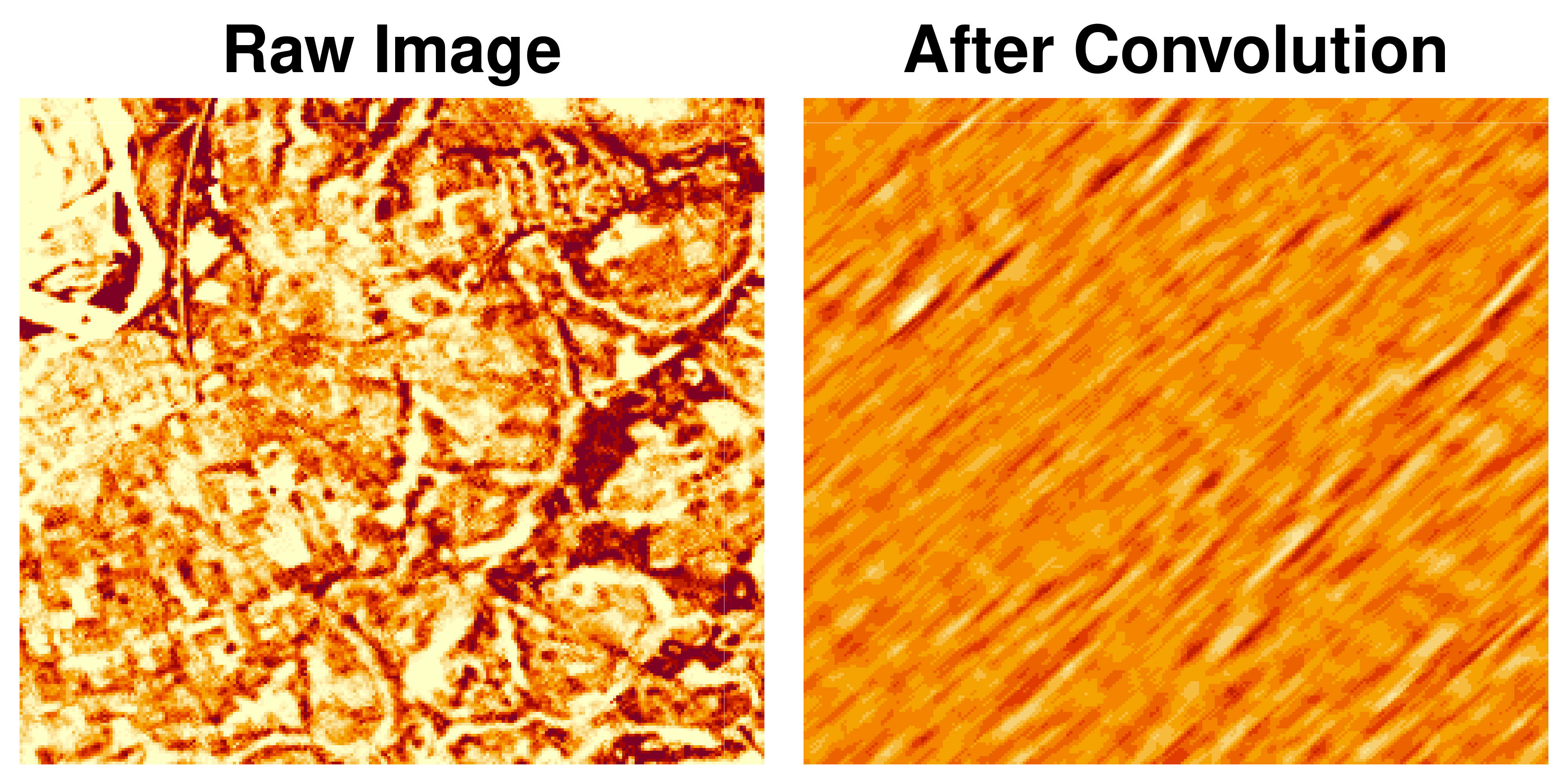}
    \end{subfigure}
        \caption{\emph{Left.} The kernel filter pattern used to generate $U_{swh}$ in the right-hand image. \emph{Center, right.} Illustration of image-based confounding using Landsat data for Nigeria. The center panel depicts the raw reflectance; the right panel depicts the transformed values after convolution with the filter, values which enter the model for treatment/outcome to generate confounding.}\label{fig:conv}
\end{figure}

\subsection{Scene-based Image Confounding}
While the pixel-based confounding structure may be relevant if small sections of land (e.g., houses) receive treatment, in many social science applications, the image is defined at one resolution, but the treatment, outcome, and confounder potentially at another. In the aid context, the treatment and outcome data are often defined at the neighborhood or village level, but the satellite data itself contains additional height and width dimensions (in a word, we can ``peek inside'' the village). To take another example, a policymaker may examine an entire village, looking for the maximum or average similarity to some target pattern: the village is the unit to which treatment is allocated, and the confounder is defined at that level but is created using more granular information. 

Satellite image-based observational inference can accommodate situations such as these, where the confounder, treatment, and outcome are defined at different scales. In particular, we can add the following to our causal model. Let $\Pi_s \in \mathbb{N}^{2}$ denote the height and width indices (locations) used when aggregating up information to the final scene-based unit of analysis, $s$. The right panel of Figure \ref{fig:conv} then illustrates a scene-based confounding structure generated by $\Pi_s$, where $\Pi_s$ intuitively represents the image indices associated with the whole neighborhood context that was used in deciding upon treatment. 

To further explain the role of $\Pi_s$, consider how, if investigators are interested in a village, $s$, the width of the satellite image collected around $s$ fixes the maximum size of $\Pi_s$ that can be accommodated in the estimation procedure. If the image is, say, 1024 by 1024 pixels, then the cap on $\Pi_s$ is $\{1, ..., 1024\}\times \{1,...,  1024\}$, which specifies a square of 14.59 km in width using the Landsat images used throughout this paper. In short, the maximum neighborhood around the village that can be used to recover the latent confounder is fixed by the width of the satellite image input. 

In general, this scene-based causal model is significant because, while some studies are able to obtain resolved (e.g., household-level) outcome data, this data may, in other cases, be costly or even impossible to obtain due to privacy reasons. In these situations, treatment and outcome data can only be measured at the scene level. We next turn to the question of identification in this image-confounding context. 

\begin{figure}[t]
\begin{subfigure}{.48\textwidth}
    \vspace{2.8em}
    \begin{center}
    \tikzstyle{main node}=[circle,draw,font=\sffamily\small\bfseries]
    \tikzstyle{sub node}=[circle,draw,dashed,font=\sffamily\small\bfseries]
    \begin{tikzpicture}[->,>=stealth',shorten >=1pt,auto,node distance=1.8cm,thick]
      
      \node[sub node] (0) {$U_{swh}$};
      \node[sub node] (1) [right of=0] {$U_s$};
      \node[main node](2)[below of =0, yshift=2mm]{$M_{sw'h'}$};
      \node[main node] (3) [below right of=1] {$T_s$};
      \node[main node] (4) [right of=3] {$Y_s$};
      \node[main node] (5) [right of=1] {$X_s$};
      
      \node[rectangle,draw=gray, fit=(0) (2),inner sep=7mm,label=below:{$w,h\in \Pi_{s}$}] {};
      
      \node[rectangle,draw=gray, fit=(2),inner sep=1mm,label=below:{\tiny $w',h'\in$ \newline $\Pi_s(wh)$}] {};

      \path[every node/.style={font=\sffamily\small}]
      (1) edge node  {} (3) 
      (1) edge node  {} (4) 
      (2) edge node  {} (0) 
      (0) edge node  {} (1) 
      (1) edge node  {} (0) 
      (3) edge node  {} (4)
      (5) edge node  {} (3) 
      (5) edge node  {} (4) ;
    \end{tikzpicture}
  \end{center}
  \caption{Image-based confounding at the scene level.}\label{fig:ComplexConvDag}
 \end{subfigure}
 \hfill
 \begin{subfigure}{.48\textwidth}
    \begin{center}
    \tikzstyle{main node}=[circle,draw,font=\sffamily\small\bfseries]
    \tikzstyle{sub node}=[circle,draw,dashed,font=\sffamily\small\bfseries]
    \begin{tikzpicture}[->,>=stealth',shorten >=1pt,auto,node distance=1.8cm,thick]
      
      \node[sub node] (0) {$U_{swh}$};
      \node[sub node] (1) [right of=0] {$U_s$};
      \node[sub node] (6) [above of=1] {$R_s$};
      \node[main node](2)[below of =0, yshift=2mm]{$M_{sw'h'}$};
      \node[main node] (3) [below right of=1] {$T_s$};
      \node[main node] (4) [right of=3] {$Y_s$};
      \node[main node] (5) [right of=1] {$X_s$};
      
      \node[rectangle,draw=gray, fit=(0) (2),inner sep=7mm,label=below:{$w,h\in \Pi_{s}$}] {};
      
      \node[rectangle,draw=gray, fit=(2),inner sep=1mm,label=below:{\tiny $w',h'\in$ \newline $\Pi_s(wh)$}] {};

      \path[every node/.style={font=\sffamily\small}]
      (1) edge node  {} (3) 
      (6) edge node  { Objects not in image} (1) 
      (1) edge node  {} (4) 
      (2) edge node  {} (0) 
      (0) edge node  {} (1) 
      (1) edge node  {} (0) 
      (3) edge node  {} (4)
      (5) edge node  {} (3) 
      (5) edge node  {} (4);
    \end{tikzpicture}
  \end{center}
  \caption{Some confounding not observed in the image.}\label{fig:ResidualBias}
  \end{subfigure}
  \caption{}
\end{figure}

\subsection{Identification in Satellite Image-based Observational Inference}\label{ss:identification} 
We can confirm that under image-based confounding as formalized in Figures \ref{fig:ConvDag} and \ref{fig:ComplexConvDag}, treatment effects may be identified by adjusting for the image $\bM$. Here, we suppress dependence on the indexes $s,w,$ and $h$ since the same arguments will apply if $T,U,$ or $Y$ are defined at the $swh$ (pixel) or $s$ (scene) levels.

To begin, we assume that the confounder $U$ is a deterministic function of $\bM$ and return to the case where $U$ has multiple causes later. This is justified, for example, in applications where confounding is based on the existence of an object---either if the policymaker scanned $\bM$ for the object prompting the  policymaker to allocate a treatment in that area of interest, or if $\bM$ is more generally associated with important confounder patterns/objects without error.

As $U$ is determined fully by $\bM$, ruling out other potential noise sources, there exists a deterministic function $f$ such that $U = f(\bM)$. The aforementioned case of $U$ being the (pooled) convolution of a 2D filter with the image $\bM$ satisfies this assumption. 

\begin{proposition}\label{prop:identification}
Suppose the confounder $U$ is deterministic given the image $\bM$, such that $U = f(\bM)$, (with $f$ unknown), and that the causal structure obeys either of Figures \ref{fig:ConvDag} \& \ref{fig:ComplexConvDag}. Then, $p(Y(t))$ and therefore ATE of $T$ on $Y$ is identifiable from $p(\bM, X, T, Y)$.
\end{proposition}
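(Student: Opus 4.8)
The plan is to reduce identification under the image-confounding graphs to the classical back-door/g-formula argument, with the single twist that we adjust for the \emph{observed} image $\bM$ in place of the \emph{unobserved} confounder $U$. The engine of the proof is the hypothesized deterministic relation $U = f(\bM)$: it makes $U$ measurable with respect to $\bM$, so conditioning on $(\bM, X)$ is ``at least as fine'' as conditioning on $(U, X)$, and whatever back-door paths $(U, X)$ would block are also controlled once we hold $\bM$ fixed.

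First I would extract two conditional-independence statements from the structural equations encoded by Figures~\ref{fig:ConvDag} and~\ref{fig:ComplexConvDag}. The first is ordinary ignorability given the true confounders, $\{Y(0), Y(1)\} \perp T \mid U, X$, which holds because $(U, X)$ are exactly the variables with arrows into both $T$ and $Y$ and hence block the only back-door paths $T \leftarrow U \to Y$ and $T \leftarrow X \to Y$. The second is an exclusion statement for the image, $\{T, Y(0), Y(1)\} \perp \bM \mid U, X$, which holds because $\bM$ touches the rest of the system only through $U$ (via the bidirected edge $U \leftrightarrow \bM$); in nonparametric structural-equation terms $T$, $Y(0)$, $Y(1)$ are functions of $(U, X)$ and of exogenous noises that are independent of $\bM$, so any information in $\bM$ beyond $U$ is irrelevant to treatment and to the potential outcomes.

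The heart of the argument is to combine these to obtain ignorability with respect to the observables, $\{Y(0), Y(1)\} \perp T \mid \bM, X$. I would do this with an elementary factorization lemma: if $A \perp B \mid C$ and $D \perp (A, B) \mid C$, then $A \perp B \mid C, D$. Applying it with $A = T$, $B = \{Y(0), Y(1)\}$, $C = (U, X)$, and $D = \bM$ yields $\{Y(0), Y(1)\} \perp T \mid U, X, \bM$, and because $U = f(\bM)$ renders $U$ redundant in the conditioning set, $(U, X, \bM)$ collapses to $(\bM, X)$. The same measurability observation transports the overlap condition: since $T \perp \bM \mid U, X$, we have $p(T = 1 \mid \bM, X) = p(T = 1 \mid U, X)$ evaluated at $U = f(\bM)$, so positivity at $(U, X)$ implies positivity at $(\bM, X)$. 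With ignorability, consistency $Y = Y(T)$, and positivity all holding relative to the observed variables, the g-formula closes the proof: $p(Y(t)) = \int p(Y \mid T = t, \bM, X)\, dP(\bM, X)$, an object built entirely from $p(\bM, X, T, Y)$, whence $\ATE = \E[Y(1)] - \E[Y(0)]$ is identified.

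The hard part will be the combination step, not the bookkeeping around it. A naive application of d-separation would report the path $T \leftarrow U \to Y$ as \emph{open} under $(\bM, X)$, since $U \notin \{\bM, X\}$ and $U$ is the only fork on that path; the resolution is precisely the determinism $U = f(\bM)$, which the factorization lemma encodes directly rather than forcing an appeal to the more delicate notion of d-separation with deterministic nodes. I would close by noting that the entire argument is index-agnostic: writing either the pixel-level confounder $U_{swh}$ or the aggregated scene-level confounder $U_s$ as a deterministic function of $\bM$ reproduces statements (i) and (ii) verbatim, so the single proof covers both Figure~\ref{fig:ConvDag} and Figure~\ref{fig:ComplexConvDag}, while the excluded case in which $U$ has causes outside the image (as in $R_s$) is exactly where the exclusion statement $\{T, Y(0), Y(1)\} \perp \bM \mid U, X$ fails and identification is lost.
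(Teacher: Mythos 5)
Your proof is correct, and it takes a genuinely different route from the paper's. The paper works directly on the adjustment functional: starting from $p(Y(t)) = \sum_u p(Y\mid T=t, U=u)\,p(U=u)$, it uses the determinism $U = f(\bM)$ to rewrite each term over the preimage $f^{-1}(u)$, so that the double sum over $u$ and $\bm \in f^{-1}(u)$ collapses to $\sum_{\bm} p(Y\mid T=t,\bM=\bm)\,p(\bM=\bm)$, exhibiting $\bM$ as an adjustment set, and then appends the IPW identity. You instead argue at the level of conditional independence: latent ignorability given $(U,X)$ and the exclusion $\bM \perp (T, Y(0), Y(1)) \mid U, X$ are combined through a contraction-type lemma, and the measurability of $U$ with respect to $\bM$ collapses the conditioning set $(U,X,\bM)$ to $(\bM,X)$, after which a single application of the g-formula finishes. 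Your route makes explicit something the paper uses only tacitly: the paper's step equating $p(Y\mid T=t, U=u)\,p(U=u)$ with $\sum_{\bm\in f^{-1}(u)} p(Y\mid T=t,\bM=\bm)\,p(\bM=\bm)$ requires replacing the treatment-conditional preimage weights $p(\bM=\bm\mid T=t, U=u)$ by the marginal ones $p(\bM=\bm)/p(U=u)$, which is valid precisely because $\bM \perp T \mid U$ in these graphs---exactly your exclusion statement. You also carry $X$ through the whole argument and transfer positivity from $(U,X)$ to $(\bM,X)$ explicitly, both of which the paper handles only by remark. What the paper's computation buys in exchange is constructiveness: it shows the $\bM$-adjustment formula arising as a coarsening of the $U$-adjustment formula, which leads directly to the IPW-with-$\bM$ estimator used in its experiments.

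One peripheral remark in your last sentence is slightly off. In the residual-confounding case of Figure \ref{fig:ResidualBias}, identification indeed fails, but not because the exclusion statement fails: since the only edge at $R_s$ is $R_s \to U_s$, conditioning on $(U_s, X_s)$ still separates $\bM$ from $(T_s, Y_s)$---conditioning on the collider $U_s$ opens a path into $R_s$, but $R_s$ is a dead end. What breaks there is the determinism hypothesis itself: $U_s$ is no longer a function of $\bM$, so your collapse of $(U, X, \bM)$ to $(\bM, X)$---and, in the paper's language, the inverse map $f^{-1}$---is no longer well defined. This mis-attribution does not affect your proof of the proposition, which assumes determinism, but it matters for diagnosing when the result ceases to apply.
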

\begin{proof} For simplicity of exposition, we give the proof for the case without additional  confounding variables $X$. The proof generalizes readily to non-empty $X$, by marginalization and conditioning. The claim follows from $U$ being a deterministic function of $\bM$. By the backdoor criterion applied to the graphs in Figures \ref{fig:ConvDag} \& \ref{fig:ComplexConvDag}, $X, U$ is an adjustment set for the effect of $T$ on $Y$, which implies the exchangeability of potential outcomes: $Y(t) \perp T \mid X$ (see, e.g., \citet{hernanbook}). In the case of empty $X$,
\begin{equation}
p(Y(t)) = \sum_{u} p(Y|T=t,U=u)p(U=u).
\end{equation}
Since $U$ is a deterministic, but not necessarily invertible function of $\bM$, $U=f(\bM)$, we have that 
\begin{align}%
p(Y\mid T=t, U=u) &= p(Y\mid T=t,\bM\in f^{-1}(u)) \;\;\mbox{ and }\;\;
p(U=u) &= \sum_{\bm \in f^{-1}(u) } p(\bM=\bm) \label{eq:marginal-u}
\end{align}%
where $f^{-1}$ is the inverse map of $f$, so that 
\begin{align*} 
p(Y(t)) &= \sum_u \sum_{\bm \in f^{-1}(u) } p(Y|T=t,\bM=\bm)p(\bM=\bm) = \sum_{\bm} p(Y|T=t,\bM=\bm)p(\bM=\bm)~.
\end{align*} 
Hence, $\bM$ is also an adjustment set for $T$ on $Y$. From a similar proof, we see that $X, \bM$ is an adjustment set in the case of non-empty $X$. From here, standard arguments \citep{rosenbaum1983central} show that
\begin{align}
\mathbb{E}[Y(t)] = \mathbb{E}\left[Y \frac{p(T=t)}{p(T=t\mid \bM=\bm)} \ \bigg| \ T=t \right]
\end{align}
which justifies the use of inverse-propensity weighting with respect to $\bM$.
\end{proof}

The argument of Proposition~\ref{prop:identification} rests on the assumption that the image contains all information about the latent confounder. When treatment decisions are made based on object detection, this assumption would be satisfied if the image contains all objects that are relevant to the outcome and treatment decision. This is violated if, for example, unlabeled objects, depicted as $R_s$ in Figure \ref{fig:ResidualBias}, are themselves the driver of the treatment decision but are not possible to reconstruct from the image data. If the image data imperfectly depicts those objects, full identification is no longer possible, as there is a possibility of residual confounding. Specifically, the inverse map $f^{-1}$ in Equation \ref{eq:marginal-u} is no longer uniquely or well defined as a set of images; $R_s$ must be adjusted for as well. In this imperfect case, the image becomes a \textit{driver} of the confounding, and thus, has similar properties to proxies \citep{pearlLinearModelsUseful2013,penaMonotonicityNondifferentiallyMismeasured2020a}.

We have shown how, under assumptions, the image is itself an adjustment set for estimating the effect of programs on outcomes in the context of image-based confounding. However, non-parametric inference is difficult in the image context because no two images are the same. Thus, the probability of seeing identical treatment/control images is zero, violating overlap assumptions necessary for model-free inference  \citep{d2019comment}. Machine learning models for the image may seek to estimate $U$, forming latent representations for the image. In this lower-dimensional space, there is more likely to be empirical overlap between treatment/control, justifying the use of modeling approaches like the ones discussed in \S\ref{ss:experiments} and \S\ref{sec:mu_empirical}. Thus, while adjusting directly for $U$ would fulfill the overlap assumptions optimally, this is infeasible; when adjusting for $\bM$ instead, a critical argument for our approach to work is that the propensities depend only on aspects of $\bM$ that capture $U$, aspects assumed to be compressible to a lower dimensional representation. As such, the situation is more benign than if propensities depended freely on all patterns in $\bM$. Nevertheless, results from parametric models can be sensitive to the details of model specification---something that can partially be addressed by robustness checks varying key parameters but cannot be conclusively settled. 

\subsection{Interpretation in Satellite Image-based Observational Inference}
Having discussed identification, we now turn to important questions around estimation and interpretation in satellite image-based observational inference. 

With tabular data, interpreting results is generally straightforward for two reasons. First, features in a tabular dataset are human interpretable: we have measurements on pre-treatment variables such as age and ethnicity in a tabular dataset, and those quantities can be readily communicated via language. 

Second, linear models predominate in much of observational inference---both in weighting methods (which often involve a logistic regression step) or in methods modeling the outcome (where OLS is often applied). Linear models have a particularly simple structure where the relationship between the inputs and the predictions can be conveyed simply via regression coefficients. For example, in the OLS context (with covariate vector, $\bX$, and with no interaction/polynomial terms), we can readily communicate how one thing relates to another: 
\begin{align*}
\frac{\partial \; \widehat{\E}\left[Y \ \mid \ \bX \right]}{\partial \; X_{d}}= \beta_d. 
\end{align*} 
With linearity, gradients are the same for all values of $X_{d}$, leading to the interpretation of OLS coefficients: a one unit change in the predictor $X_{d}$ is associated with a $\beta_d$ unit change in the outcome, holding all else equal. Interpretation (in the sense of understanding how covariate inputs relate to model outputs) is arguably straightforward---at least if we sidestep the subtle issue of what it truly means to ``hold all else equal.''

In the image context, we can augment this derivative-based notion of interpretability via the use of salience maps \cite{gilpin2018explaining}. In particular, we can examine the following quantity in observational inference to explain {\it what} the prediction of the treatment assignment is particularly sensitive to in the image:  
\begin{align}
\mathbf{S} = \sqrt{  \sum_{c = 1}^C \left( \frac{\partial \; \widehat{\Pr}(T \ \mid \mathbf{M} )}{\partial \; \bM_{\cdot,\cdot,c}}\right)^2  }.
\end{align} 
$\mathbf{S}$ is of the same height and width dimensions as the raw image, $\bM$, and $\bM_{\cdot,\cdot,c}$ denotes the image slice of channel/feature $c$.\footnote{These features typically correspond to quantities such as reflectance, ultimately measured in energy per unit area per wavelength such as $\textrm{Watts}/(m^2 \cdot \mu m)$.}

The quantity, $\mathbf{S}$, represents the magnitude of the gradient of the predicted probability of treatment with respect to the image, where the magnitudes are computed across all the channel/feature dimensions of the image. This salience map, therefore, provides some indication of the parts of the image that are ``important'' in predicting the outcome, where that importance is calculated using derivatives to quantify importance as mathematical sensitivity. These derivatives are calculated via automatic differentiation, a computational tool to evaluate exact gradients even when no closed form is available \citep{griewank2003introduction}. Because they use automatic differentiation, salience maps in satellite-based observational inference can be computed with some, but not all, models for $\widehat{\Pr}(T \ \mid \bM)$. In particular, we require that $\widehat{\Pr}(T \ \mid \bM)$ is continuously differentiable with respect to $\bM$, which is the case for the most common class of image models such as convolutional networks (see Table \ref{tab:Differentiable} for a list of models outlined by differentiability). 

\begin{table}[ht]
\centering
\caption{Differentiability of  candidate models in image-based observational inference. Salience maps can be readily calculated for differentiable models.}\label{tab:Differentiable}
\begin{tabular}{ll}
\toprule \midrule
\underline{\it Differentiable} & Generalized linear models 
\\  & Feed-forward neural networks 
\\  & Convolutional models
\\  & Transformers 
\\  & Recurrent neural networks 
\\ & 
\\  \underline{\it Non-differentiable} &  Tree-based models 
\\ & Models involving greedy/discrete optimization 
\\ \bottomrule
\end{tabular}
\end{table}%

Another challenge related to interpretability in satellite image-based causal inference relates to the Stable Unit Treatment Value Assumption (SUTVA). This assumption states that the treatment of any scene $s$ should not affect the treatment status or outcome of another unit $s'$. In other words, each scene is i.i.d, as defined by our DAG. However, in spatial analysis, it may be harder to defend or even define SUTVA. Units that are closer in space may affect each other via spillover effects \citep{breza2016field}. For example, a policymaker allocating aid to one village may unintentionally affect outcomes in a nearby village, which benefits from the nearby allocation of assistance. SUTVA violations, and other forms of dependence (e.g., spatial clustering), can in principle be accounted for by specifying an appropriate variance-covariance structure \citep{sinclairDetectingSpilloverEffects2012}. However, this variance-covariance structure may be difficult to capture in practice, so that there may be difficulties in interpreting or explaining results from image-based observational causal inference. 

A more interesting possibility lies in the prospect that satellite image data may actually contain information about the latent interference structure present within the social system. As already noted, satellite data is informative transportation networks \citep{nagne2013transportation} which may correlate with patterns of social influence. Thus, while we can use block bootstrapping or other techniques to address spatial dependence, future work should probe the degree to which satellite data itself is informative about the underlying patterns of influence present within social reality. 

To conclude this section on interpretability in satellite image-based observational inference, we offer a few reflections on the centrality of resolution in this task. 

First, we note that resolution is a key driver of the residual confounding, as discussed in \S\ref{ss:identification}. We can only adjust for confounder objects in the image that can be resolved. Smaller confounder objects, therefore, introduce residual bias, indicating how technological improvements to sensor technology play a critical role in improving image-based causal inference methods. In short, the resolution determines the kinds of explanations we can make regarding our ability to reconstruct the treatment assignment mechanism. If there is a single pixel per scene, then that pixel can only capture global information about things such as the abundance of greenspace, soil moisture, and other quantities. If there are hundreds or thousands of pixels per scene, then more complex objects can be detected such as houses, roads, and trees. Interpretability in image-based inference is therefore inextricably tied to resolution. 

Another related motivation for obtaining high-resolution remote sensing data lies in the possibility that the use of satellite images can reduce researcher assumptions about how information from smaller scales aggregates up to the scene scale. Because each image is defined at a more granular resolution than the unit of analysis, we can use it to potentially reconstruct some unobserved confounders by learning the function generating confounders from the image. 

For example, assume that researchers seek to analyze the effect of a village-level treatment. From a government census, they obtain mean income information for the village ($s$) and then perform an analysis assuming $Y_s(0), Y_s(1)\perp T_s|X_s$, where $X_s$ contains the income data, $T_s$ is the treatment, and $Y_s(t)$, the potential outcome under $t\in\{0,1\}$. Yet, unless the mean income is the true confounder, the analysis will still be biased, which would occur if, in fact, \emph{minimum} income drove the decision to allocate $T_s$. However, using satellite images for each scene, we seek to reconstruct the minimum income signal based on our access to the higher-resolution data. 

In other words, we can weaken the assumption used in many empirical analyses that the variables measured at the scene level in fact contain the true confounders, when in reality highly non-linear functions---that use more granular information---may have generated the confounding structure. With image data, we, in principle, can hope to reconstruct some of those factors using advances in image-based machine learning models effective in the prediction domain (e.g., \citet{sun2013deep}).

Despite the potential of learning about how lower-level information aggregates up using high-resolution satellite images, there are still other subtleties to consider. For example, if treatments are defined at a high level of resolution (say, a house), but satellite data is defined at a lower level of resolution (say, a neighborhood block), there are ambiguities in how observational inference should be performed. Without higher-resolution data, we cannot adjust for home-level confounders, but if we aggregate treatments to the block level, information is lost, and extra researcher degrees of freedom are introduced in terms of how that aggregation should be done (whether in terms of summing, averaging, or aerial interpolating to the block level). For pixel-level treatments, therefore, extra caution is warranted due to the nuances of how resolution affects the confounders that can be captured by an image model using satellite resources. 

\subsection{Other Challenges and Opportunities of Satellite Image-based Observational Inference} \label{ss:estimation}

In addition to model dependence of observational inference with satellite images, there is another, perhaps more fundamental challenge. By its nature, satellite data is geo-referenced, and data on treatment assignments need to be geo-referenced as well if it is to be integrated into this pipeline. This linkage in some circumstances may be straightforward: a town receives an anti-poverty intervention, or it does not, and that town can be geo-referenced using APIs such as Google Maps and OpenStreetMap \citep{yeboah2021analysis}. However, in other circumstances, the linkage may be ambiguous: if regions are the units receiving treatment, it is less clear what satellite information should be used in modeling the treatment mechanism. In some situations, units are not geo-referenced at all: there may be data privacy or other reasons why even the approximate location of experimental units is explicitly not measured by researchers. Finally, it is important to note that, if disadvantaged units are less likely to be geo-referenced to neighborhoods than more advantaged units, we will again be in a situation where systematic missingness patterns bias causal estimates. These important limitations can also present opportunities for future research.

\section{Experiments Illustrating Observational Causal Inference Dynamics Under Model Misspecification and Varying Resolution}\label{ss:experiments}
Although the identification results described in \S\ref{ss:identification} are general, they are also theoretical and, as described in \S\ref{ss:estimation}, there are several practical considerations that will affect performance in real data. In order to better understand these dynamics, we use simulation, since, in practice. true causal targets are unknown. In these simulations, the image data is observed, but the confounding features are not, and must be estimated, as is the case in real applications. 

\subsection{Simulation Design} 
The simulation design centers on the scene level of analysis because, in practice, there seem to be fewer situations where pixels (14.25 meter by 14.25 meter grid cells) are treated than the situation where larger areas are treated (or where units are treated and we seek to adjust for confounding given their entire neighborhood context). 

In the simulation, confounding is generated by (1) applying a single convolution to our set of Landsat images from Nigeria (operation by $f_l(\cdot)$), (2) finding the maximum similarity across an image to the kernel filter (i.e. using the global maximum pooling operation), and (3) normalizing across observations so that the resulting confounder has mean 0, standard deviation 1. The confounder then enters equations for the treatment and outcome: 
\begin{align*} 
U_{swh} &= f_l(\bM_{s\Pi_s(wh)}) 
\\ U_{s} &= \textrm{GN}(\max\{ U_{swh}: {wh \in \Pi_{s}} \}),
\\ \Pr(T_{s} | \bM_s) &= \textrm{logistic}( \beta  U_{s} + \epsilon_{s}^{(W)})
\\ 
Y_{s} &= \gamma    U_{s} + \tau \, T_{s} + \epsilon_{s}^{(Y)}, 
\end{align*} 
where GN($\cdot$) denotes normalization to mean 0 and variance 1, done to prevent all units from receiving treatment. $f_l(\cdot)$ denotes the linear kernel function, where the single kernel filter, $l$, is a diagonal matrix and is shown in Figure \ref{fig:conv}. The set of images used in the simulation are drawn from the corpus of Landsat satellite images that we later use in the application (see \S\ref{ss:data}). The $\epsilon$'s are Normally distributed random error terms. 

We vary the dimensions of the estimating convolutional filter, $l$, keeping the structure of the true confounder-generating filter pattern fixed with a width of 9 (see Figure \ref{fig:conv}). By varying the estimating filter dimensions, we alter the size of the neighborhood used in estimating the latent confounder, allowing us to probe model misspecification, where there is a gap between the true data-generating process and estimating model structure. 

We also vary the resolution of the image used in estimation. This quantity varies from 1 (when the observed image resolution is the same as used in confounder generation) to 0.12 (where the image resolution observed in the estimation step is 12\% that of the original; in other words, pixels are 88\% larger in width). By varying the resolution, we can probe how this unique feature of images affects observational causal inference estimation in practice.

We compare two estimators. First, we examine the difference-in-means estimator, $\hat{\tau}_{\textrm{0}}$, defined as the difference in conditional outcome means for the two treatment groups. Because of confounding, this baseline quantity is biased for $\tau$. 

We also report results from an Inverse Propensity Weighting (IPW) estimator \citep{austin2015moving}, with image model estimation performed using a single-layer convolutional network model which is trained using stochastic gradient descent with the binary cross-entropy loss function (which is equivalent to the negative log-likelihood). The estimation formula for IPW is $\hat{\pi}(\bM_s)=\widehat{\textrm{Pr}}(T_s=1|\bM_s)$, $\widehat{\tau} =  \frac{1}{n}\sum_{s=1}^n \left\{ \frac{T_s Y_s}{\hat\pi(\bX_s)} -\frac{(1-T_s)Y_s}{1-\hat\pi(\bX_s)}\right\}$ for the scene analysis and is defined analogously at the pixel level. We report results from the Hajek IPW estimator where the weights have been normalized, reducing estimator variance at the cost of some finite sample bias \citep{skinner2017introduction}. 

In our evaluation, we compare the true $\tau$ with the estimated values. We compute the absolute bias of $\hat{\tau}$ and the Root Mean Squared Error (RMSE): 
\begin{equation}
\textrm{Absolute Bias} = \big|\E[\hat{\tau} - \tau]\big|; \;\;
\textrm{RMSE} = \sqrt{\E[(\hat{\tau} - \tau)^2 ]},
\end{equation}
where expectations are taken over the data-generating process and are approximated via Monte Carlo. These evaluation metrics are then normalized using the MSE from the baseline estimator, $\hat{\tau}_0$, to facilitate the interpretability of the results (so that Relative Absolute Bias and Relative RMSE are reported).

\subsection{Simulation Results}
In Figure~\ref{fig:SceneResultsVaryGlobality}, we show the dynamics of satellite image-based observational inference in this simulation. As expected, we find that the absolute bias and RMSE are minimized when the kernel width used in estimation is the same as the kernel width used to generate the true confounder and when the resolution is the same as that used in generating the confounder. The bias and RMSE grow larger when the estimating kernel width is greater than the kernel width used in confounder generation. This finding is likely due to the fact that, when the neighborhood size used in estimation is larger than that used to create the confounder, the additional parameters allow the model to overfit. The bias and RMSE panels of Figure \ref{fig:SceneResultsVaryGlobality} look similar because the variance of estimation is relatively small, especially when the resolution has been down-shifted. 

Resolution also plays an important role in explaining the dynamics of observational inference with satellite images. For all values of resolution, bias and RMSE of estimation are still  favorable compared to using the naive difference-in-means estimator without satellite-based neighborhood information. In other words, low-resolution images still improve upon the simple difference-in-means estimator of the treatment effect. 

When the resolution of the image used in estimation is the same as that that actually generated the confounder, performance in terms of bias and RMSE is optimal when model specification is correct, but, with this higher resolution image, the analysis is also more sensitive to model misspecification. When the resolution is down-shifted, model misspecification matters less; there is less image information available so the dependence of the results on the image model is weakened. We find similar results when we vary the degree of noise in the scene-level confounders, loosening the determinism assumption discussed above (see \S\ref{ss:AdditionalSims}).  
\begin{figure}[t]
    \centering
        \centering
       \includegraphics[width=0.90\linewidth]{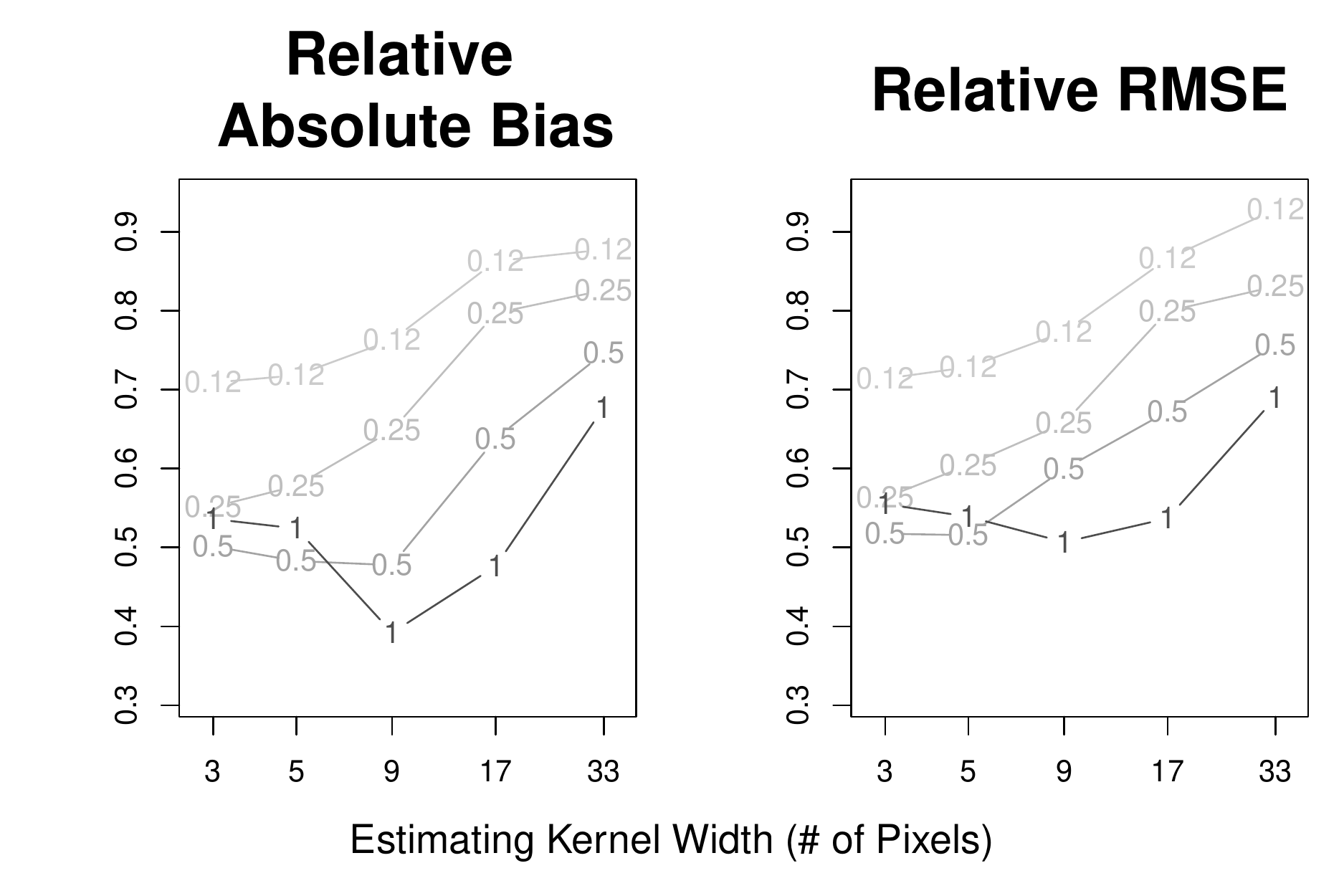}
        \caption{Scene-level results varying the resolution and character of model misspecification. The true confounder was generated with a kernel width of 9 and resolution scaling factor of 1. Resolution scaling factors are numerically labeled and colored with grayscale. A resolution scaling factor of 0.5, for example, indicates that the pixels in that analysis are twice as wide as in the original raw image.}\label{fig:SceneResultsVaryGlobality}
\end{figure}

\section{Empirical Illustration: The Impact of Local Aid Programs on Neighborhood-level Poverty in Nigeria}\label{sec:mu_empirical}
Having examined some of the opportunities and pitfalls of satellite-based observational inference, in this section, we demonstrate it in the context of Nigeria, Africa's largest economy and a country that is projected to be the world's second most populous by 2100 \citep{vollset2020fertility}. This context is one in which identifying effective anti-poverty interventions carries substantial practical importance: despite an average economic growth rate of around 3\% since 2000, about 40 percent of the Nigerian population live below the poverty line (\$2 per day). In response, governments and NGOs have deployed a variety of local aid programs to the country. However, the causal impact of these programs is difficult to estimate \citep{roodman2008through}. 

While geo-temporal data on poverty, $Y$, and interventions, $T$, are readily available, there is a lack of geo-temporal data on potential confounders, $U$, at the local level \citep{daoud_impact_2017,hallerod2013bad}. While some of these confounders may be difficult to capture directly using images (such as the quality of political institutions), there may be information present in remote sensing imagery about other confounder objects related to infrastructure or agriculture \citep{schnebele2015review,steven2013applications}. We therefore use satellite images of Nigerian communities in order to estimate the impact of local aid programs. 

\subsection{Data Description}\label{ss:data}
Our outcome data on poverty is drawn from the Demographic and Health Surveys (DHS), which are conducted by a non-profit organization, ICF International, with funding from USAID, WHO, and other international organizations \citep{rutstein2006guide}. The DHS surveys are conducted at the household level for randomly selected clusters that aggregate to geographically explicit scenes of about 300 households for de-identification purposes. Our outcome measure is the International Wealth Index (IWI) from 2018, which is a principal-components-derived summary of 12 variables including households' access to public services and possession of consumer products such as a phone. Its scale is between 0 and 100, with higher values indicating greater wealth. 

Our treatment data is drawn from a data set on international aid programs to Nigeria compiled by AidData \citep{aiddata2016} used under an Open Data Commons License. The aid programs we examine took place after 2003. The aid providers include entities such as the World Bank and WHO, as well as domestic governments such as the United States. The programs we examine are diverse, focusing on infrastructure (e.g., support for road development) and agriculture (e.g., support for small-scale farmers), among other things. For the simplicity of our presentation, we take the presence of a geographically-specific aid program within 7000 meters of a DHS point as our treatment. 

Our pre-treatment image data is drawn from Landsat, the satellite imagery program operated by NASA/USGS. We use the Orthorectified ETM+ pan-sharpened data derived from the raw satellite imagery captured between 1998 and 2001; the raw data have been processed to contain minimal cloud cover and to be correctly geo-referenced. Resolution is 14.25 meters; reflectance in the green, near-infrared, and short-wave infrared bands is recorded. Treatment and control locations are shown in the left panel of Figure \ref{fig:NigeriaContext}.

\begin{figure}[h]
    \centering
        \centering
        \includegraphics[width=0.75\linewidth]{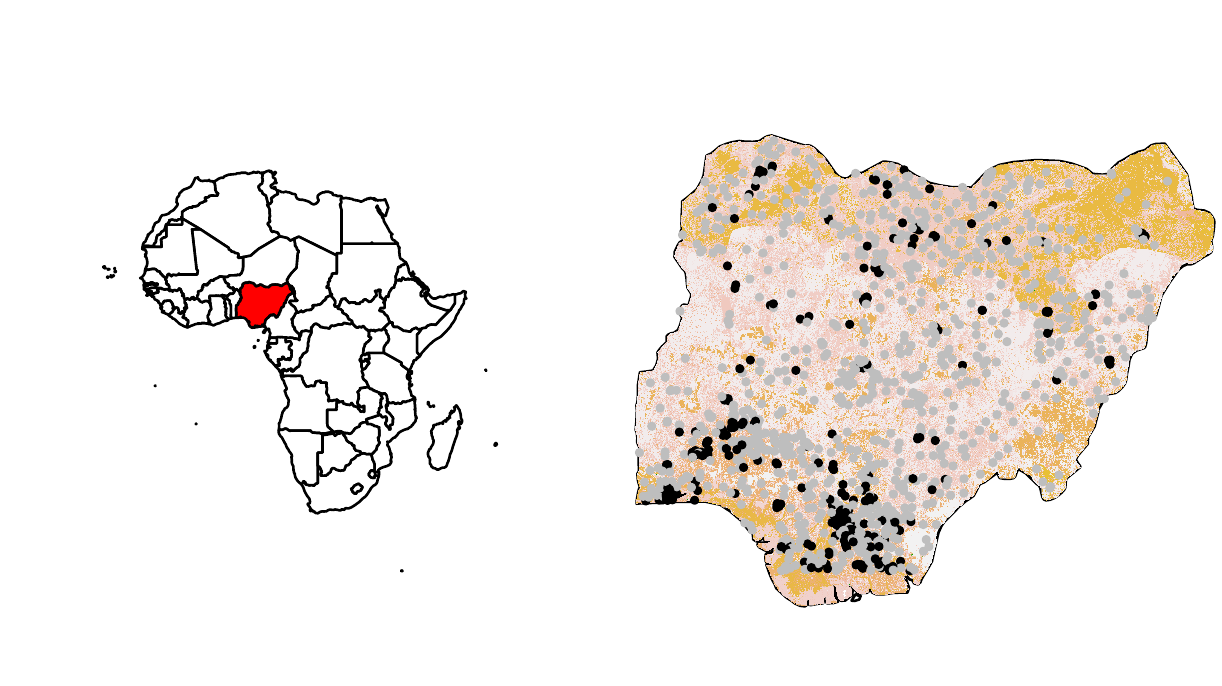}
        \caption{The left panel identifies Nigeria as the context of interest in this observational study. The right panel illustrates the location of treatment and control sites. Gray points are control locations; black points are treatment locations. }\label{fig:NigeriaContext}
\end{figure}

\subsection{Observational Inference Image Model Design}\label{ss:model}
Our image model for the treatment is built using three convolutional layers (32 filters each) with max pooling. After the application of the filters, we project the channel dimension into a 3-dimensional space to facilitate interpretability via a single image post-convolution. Batch normalization is used on the channel dimensions and before the final projection layer. Weights are learned using ADAM with Nesterov momentum with cosine learning rate decay (baseline rate = 0.005; \citet{gotmare2018closer}). We compare performance across a variety of filter sizes. We attempt to limit overfitting by randomly reflecting each image dimension with probability $1/2$ during training. We assess out-of-sample performance using three random training/test splits, averaging over this sampling process and reporting 95\% confidence intervals from the three test set assessments. We vary the image model structure by altering the convolutional filter width (which affects the size of the image patterns analyzed in the image model). 

We also assess stability of the results to resolution of the satellite images used. This task contains some subtleties. It is always possible to lower the resolution of an image, averaging across pixel grids. However, when we change the dimensionality of the input image, we may no longer be able to employ the same image models as in the full dimensionality case. The reason for this is due to the fact that each convolutional layer in the image model reduces the output dimensionality; if our starting resolution is too small, the implied output dimensionality of some of the convolutional layers would be negative. We keep the image model structure the same and analyze results across different resolutions. As a consequence, when the image model cannot be fit with a given kernel width and resolution combination, we drop that model from the set of models analyzed. 

\subsection{Empirical Results}\label{ss:ApplicationResults}
First, in the left panel of Figure \ref{fig:ApplicationMetrics}, we assess the propensity model fit to the treatment data. We find that the image model always improves on the baseline out-of-sample loss value obtained by random guessing of the dominant class (the control class, comprising 71\% of the data sample). Performance is stable across values of the kernel width used in estimation. Moreover, resolution does not greatly affect performance. This stability implies that more macroscopic aspects of the landscape are ultimately what is driving the estimated treatment assignment mechanism, as opposed to the lower-level features such as the grouping of houses of a certain type. 

\begin{figure}[t]
    \centering
    \includegraphics[width=1.\textwidth]{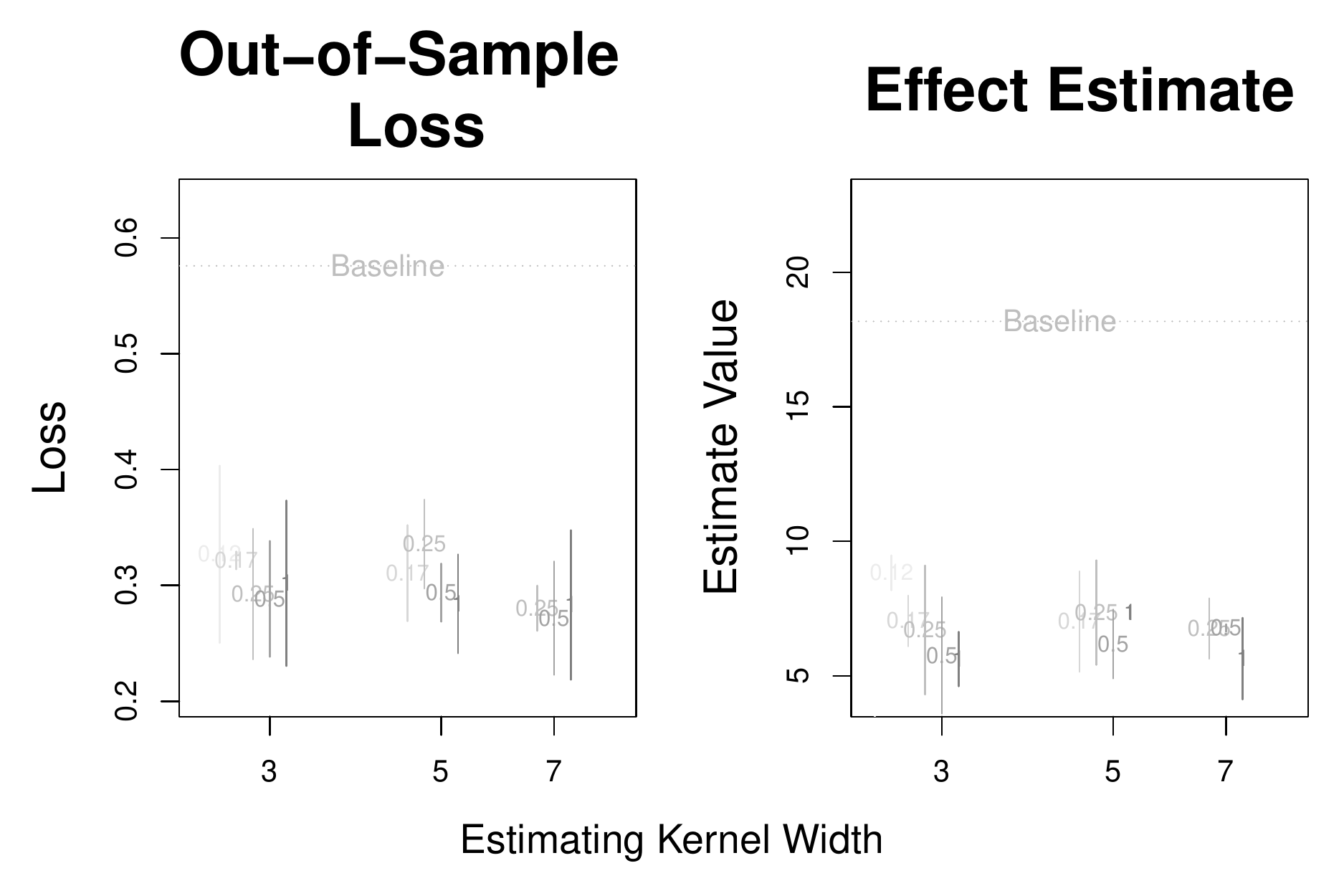}
    \caption{The left panel shows out-of-sample binary cross entropy loss compared to the baseline loss when guessing the dominant class. The right panel shows the IPW estimate values across the range of estimating kernel width values compared to the baseline difference in conditional means.}\label{fig:ApplicationMetrics}  
\end{figure} 

Next, in the right panel of Figure \ref{fig:ApplicationMetrics}, we analyze the estimated treatment effects. We find that, across the estimating kernel width range, adjusted estimates are positive but smaller in magnitude than the baseline difference in means. This hints at the importance of confounder adjustment, as these programs may be given to areas already primed for growth. 

We also analyze the estimation model dynamics. In Figure \ref{fig:InsideConvnet}, we visualize data for the three out-of-sample control (left) and treated (right) units. We also display the salience maps for the predicted treatment probabilities as well as the output from the final spatially resolved layer in the image model when the kernel width is 7. To explore robustness, we also display in \S\ref{ss:EmpiricalResultsRobust} results from another run with a kernel of 7 in the estimation model, as well as width kernel widths 3 and 5. 

To take examples from the model output, a particularly low treatment probability site is estimated in the remote desert city of Machina (pop. 62,000); a high probability site is from the city of Katsina (pop. 429,000) near a large agricultural basin and with rich water resources (water soil content is displayed as red in the RBG satellite images using non-visible band). The output of this model would seem to resonate with the fact that many of the projects undertaken by global actors are specifically designed to assist farmers and agriculture more generally. 

\begin{figure}[t]
    \centering \includegraphics[width=1.\textwidth]{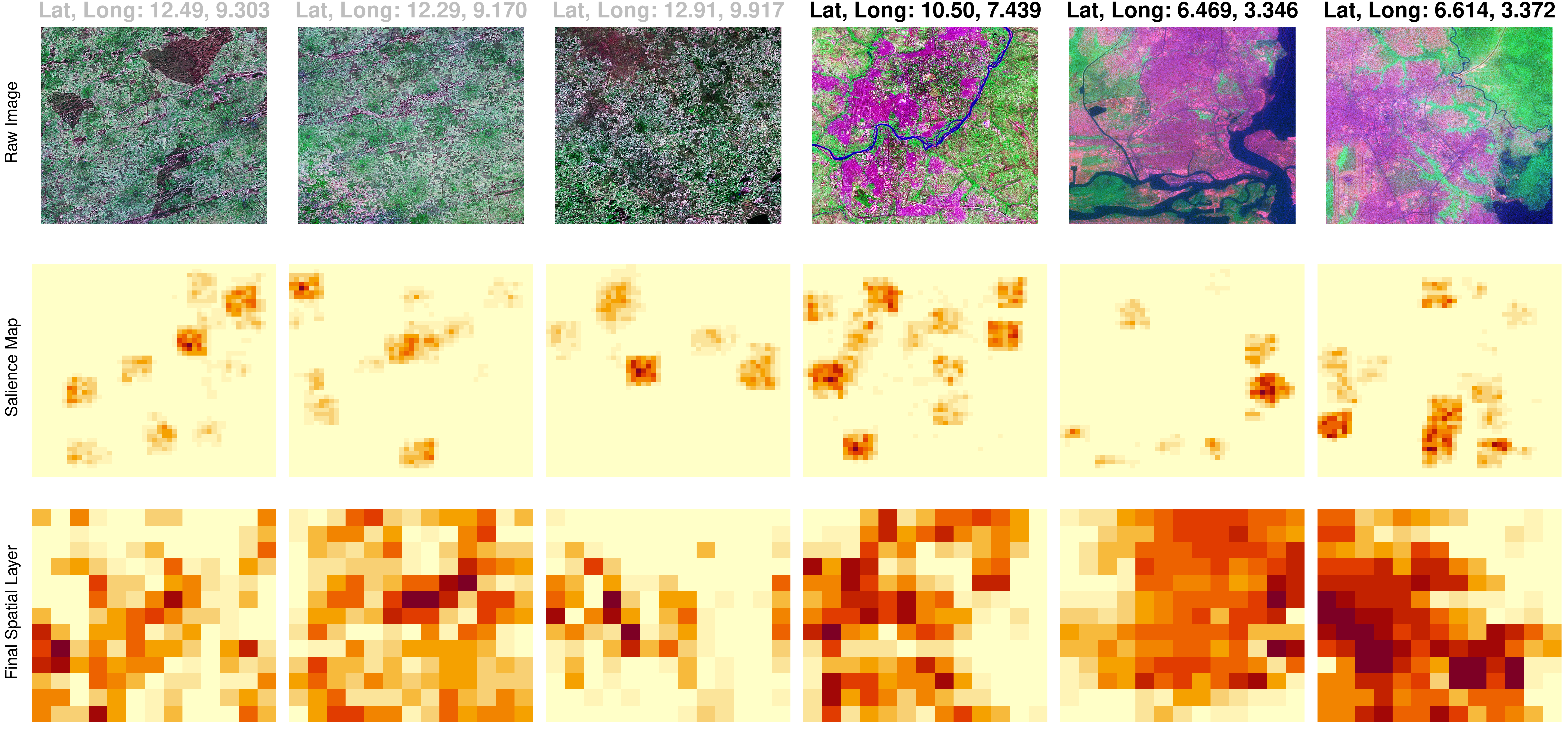}
    \caption{The three left panels depict the raw data for control units, salience maps for the predicted treatment probability, and output from the final spatially resolved layer in the image model. The three right panels depict the same things for treated units. Note that the three bands of the satellite image are not ``red'', ``green'', and ``blue'' (see main text), so in this visual representation, reddish pink indicates soil moisture content. }\label{fig:InsideConvnet}
\end{figure} 

Finally, we assess the performance of the estimated propensity scores on reducing covariate imbalance between treated and control groups. The same absence of rich covariate information in places such as sub-Saharan Africa that motivates this paper also makes this assessment task difficult. Still, we can analyze differences in longitude/latitude between treated/control groups. We find a raw difference of (-1.12, -1.13). After weighting, this difference decreases in magnitude to (-0.39,0.53), indicative of improved counterfactual comparisons. 

Overall, this application shows some of the nuances of satellite image-based observational inference---how resolution affects the space of possible image models, how to think about interpreting the output of the causal inference model in the image context, as well as how to validate the improvement in counterfactual comparisons after employing the satellite information.

\section{Discussion and Future Work}\label{s:discussion}
In this paper, we characterize key challenges and opportunities of satellite image-based observational causal inference. We formally show that, even though confounder aspects of a neighborhood may be latent, we can adjust for them using the image information. There are several subtleties, however, related to resolution and the degree to which the image truly proxies those confounders. We illustrate some of these tradeoffs in simulation. We also apply the satellite image-based observational inference approach in order to understand the causal effect of aid programs on reducing neighborhood-level poverty in Africa's largest economy. As previously stated, our approach is not limited to poverty. It can be used for analyzing the effects of environmental factors \citep{shiba_heterogeneity_2021,daoud_what_2016} to neighborhood-level change \citep{lin_what_nodate}, and beyond.

The use of satellite images in observational inference approach has some inherent limitations. For example, as described above, confounding may be due to objects that cannot be resolved in the image data, and, as a result, bias reduction will not occur conditioning on the image information. In this context, the collection and application of imagery with higher spatial, temporal, and spectral resolution is a priority. Spatial and temporal resolution may both be achieved, for example, using sensors mounted on ground-based infrastructure (e.g., \citet{johnston2021measuring}); spatial resolution and extent could be optimized with drone- or airplane-based instruments (e.g., \citet{gray2018integrating}). Future considerations should examine privacy and fairness issues with causal analyses based on passive sensor technologies. 

Second, while in some cases, such as if disparate individuals are treated, the scene-level unit of analysis is clearly defined (e.g., the individual within a neighborhood context), in other contexts, the scene-level unit of analysis is more ambiguous. The researcher therefore has choices about how to define the scene (e.g., at the street, village, or region levels), a choice that could introduce systematic bias into the analysis. This issue is known as the modifiable areal unit problem \citep{fotheringham1991modifiable}, and the approach described here is vulnerable to it as well. Future work should therefore focus on the development of image-confounding methods that have theoretical guarantees on the robustness of the results to the scale of action examined. This issue of scale is also related to the question of capturing the treatment and outcome at different scales.

Third, machine-learning-based image models learn the image patterns that best predict treatment assignment. More research is needed to connect these patterns, learned inductive, with the mental processes of real actors as they consult images in decision-making. This path of research could forge links between cognitive science, machine learning, text analysis, and causal inference.  \hfill $\square$

\medskip

\bibliographystyle{plainnat}
\bibliography{confoundingbib}

\clearpage\newpage
\setcounter{page}{1}
\appendix

\renewcommand{\thefigure}{A.\arabic{figure}}
\setcounter{figure}{0}

\section{Appendix}
\subsection{Connections with the Causal Proxy Literature}\label{ss:proxy}
Besides facilitating the use of causal inference for a social science audience, our work is  related to the literature on identification via proxies~\citep{tchetgen2020introduction} or drivers~\citep{pearl2013linear} of confounders. For the former, \citeauthor{louizosCausalEffectInference2017a}, developed Causal Effect Variational Autoencoder (CEVAE) which uses proxies to infer the distribution of the latent confounder and use this in  adjustment. In contrast, our approach adjusts for an observed variable---the image. We formalize key assumptions required for the correctness of this method and provided a general framework for conducting causal inference using images, where unlabeled objects in the image may affect both treatment and outcome \citep{castroCausalityMattersMedical2020}. This image-based confounding bias might in some circumstances be equivalent to traditional spatial interdependence, but differs insofar as the confounding bias is defined with reference to unlabeled entities in the image, thereby injecting bias \citep{paciorek2010importance}. Relying on our formalization and model implementation, we analyze aid interventions (treatment) and poverty (outcome) in Africa---something of policy relevance as policymakers often rely on satellite images for aid intervention \citep{voigtGlobalTrendsSatellitebased2016, bediMorePrettyPicture2007}.

\subsection{Additional Simulation Results}\label{ss:AdditionalSims}

\subsubsection{Probing Estimation Bias as the Determinism Assumption is Relaxed}
We here explore how model misspecification affects estimation error. In particular, we probe how relaxing the determinism assumption of Proposition \ref{prop:identification} affects satellite-based observational inference. In particular, we now let the unobserved confounder be a random function of the satellite image, as depicted visually in Figure \ref{fig:ResidualBias}. In particular, the confounder values are now 
\begin{equation}
U_{swh} = f_l(\bM_{s\Pi_s(wh)}) + \epsilon_{swh}^{(U)},
\end{equation}
where $\epsilon_{swh}^{(U)}
\sim \mathcal{N}(0,\sigma_U^2)$. We vary $\sigma_U^2 \in \{1,3,5,7\}$. We then apply the same data-generating process to obtain scene-level treatments and outcomes. 

We see in Figure \ref{fig:SceneResultsVaryGlobalityNoise} how performance is affected by relaxing the determinism assumption. As expected, we see that estimation bias grows as the unobserved confounding is increasingly determined by the noise factor, $\epsilon_{swh}^{(U)}$. When the noise scale is at its maximum, bias is still no worse than the simple difference in means baseline (i.e., relative bias/RMSE approaches 1). This fact is likely because the noise injected into the confounding mechanism is itself exogenous. Nevertheless, having established a theoretical baseline in this paper, future research should examine this noise-induced confounding to image-based causal inference in greater detail, connecting this line of work with the proxy literature. 

\begin{figure}[h]
    \centering
        \centering
        \includegraphics[width=0.75\linewidth]{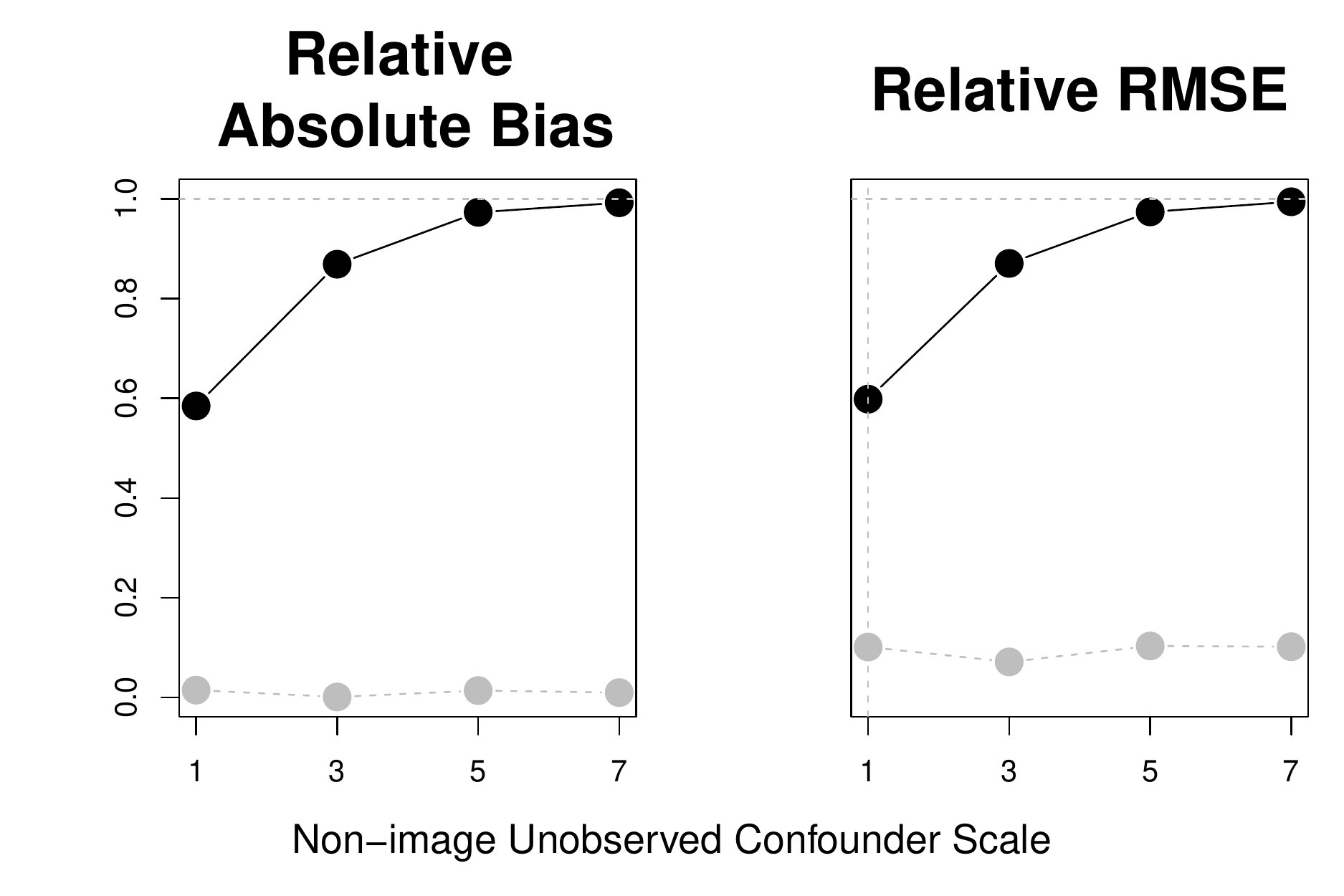}
        \caption{Bias and RMSE for the scene-level analysis as we vary the stochasticity present in the confounding mechanism, holding the estimation kernel width fixed at 8. Gray circles indicate effect estimate values using the true (in practice unobserved) treatment probabilities. }\label{fig:SceneResultsVaryGlobalityNoise}
\end{figure}

\newpage
\subsection{Robustness to Model Specification in the Empirical Results}\label{ss:EmpiricalResultsRobust}
\begin{figure}[h]
    \centering \includegraphics[width=1.\textwidth]{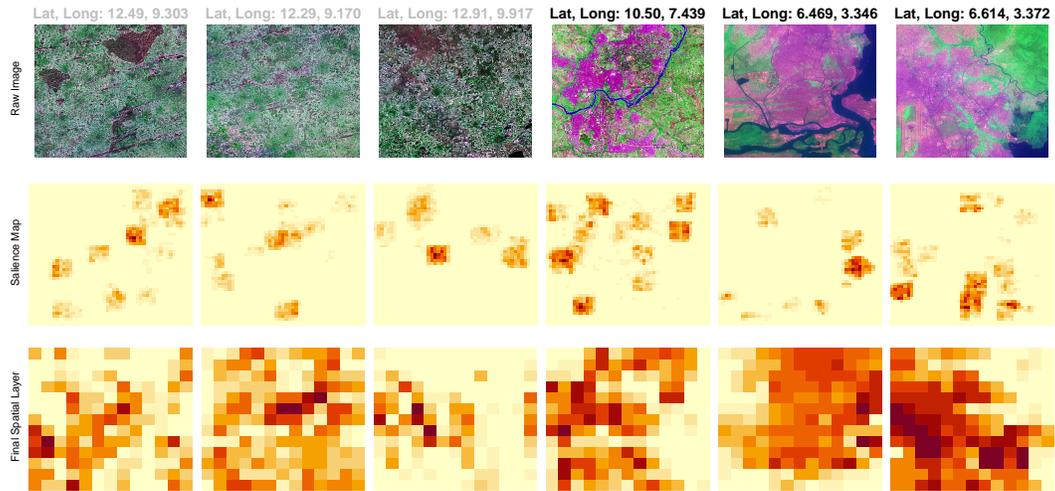}
    \caption{Replicating Figure \ref{fig:InsideConvnet} with another training/test split.}
\end{figure} 

\begin{figure}[h]
    \centering \includegraphics[width=1.\textwidth]{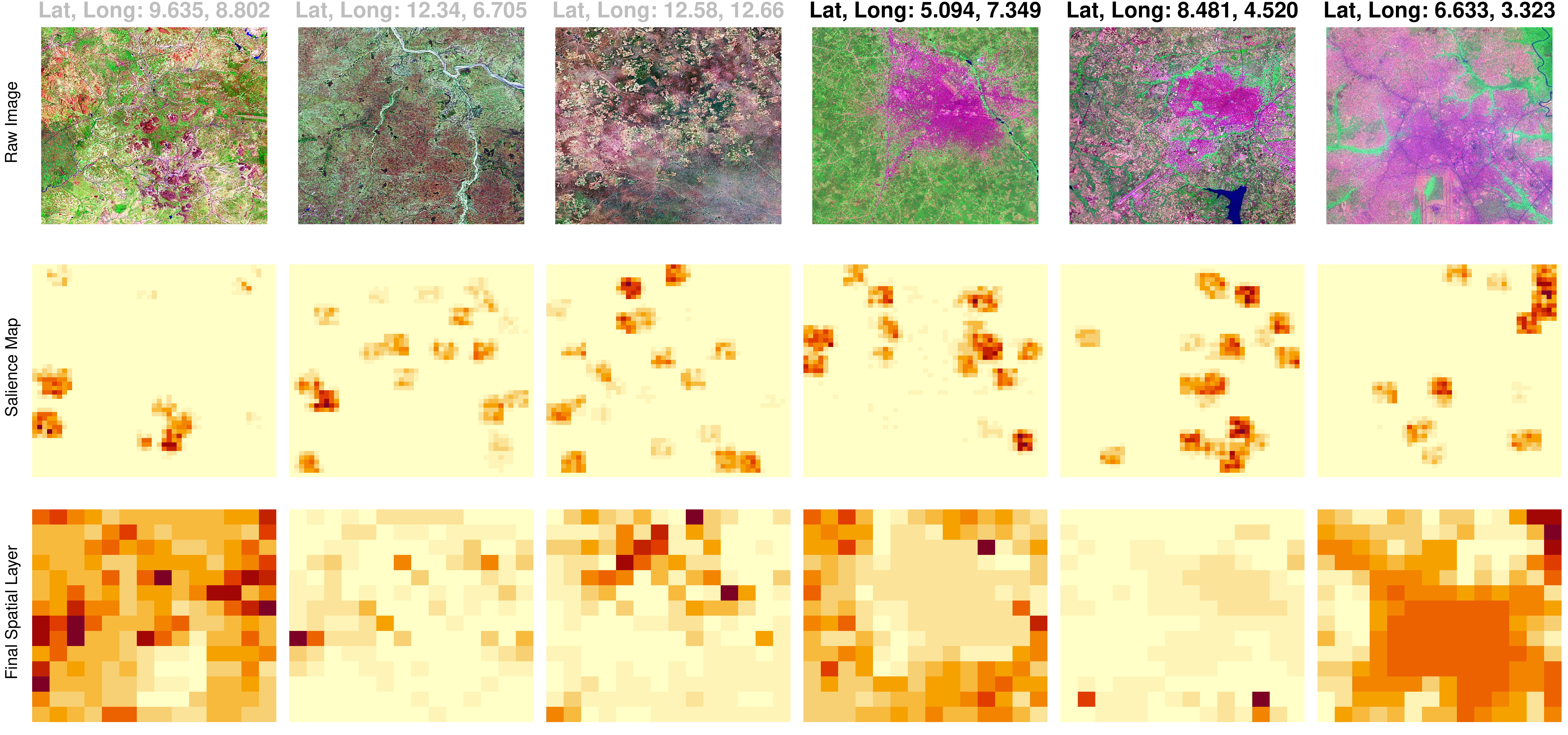}
    \caption{Replicating Figure \ref{fig:InsideConvnet} with an estimating kernel width of 5 instead of 7.}
\end{figure} 

\begin{figure}[h]
    \centering \includegraphics[width=1.\textwidth]{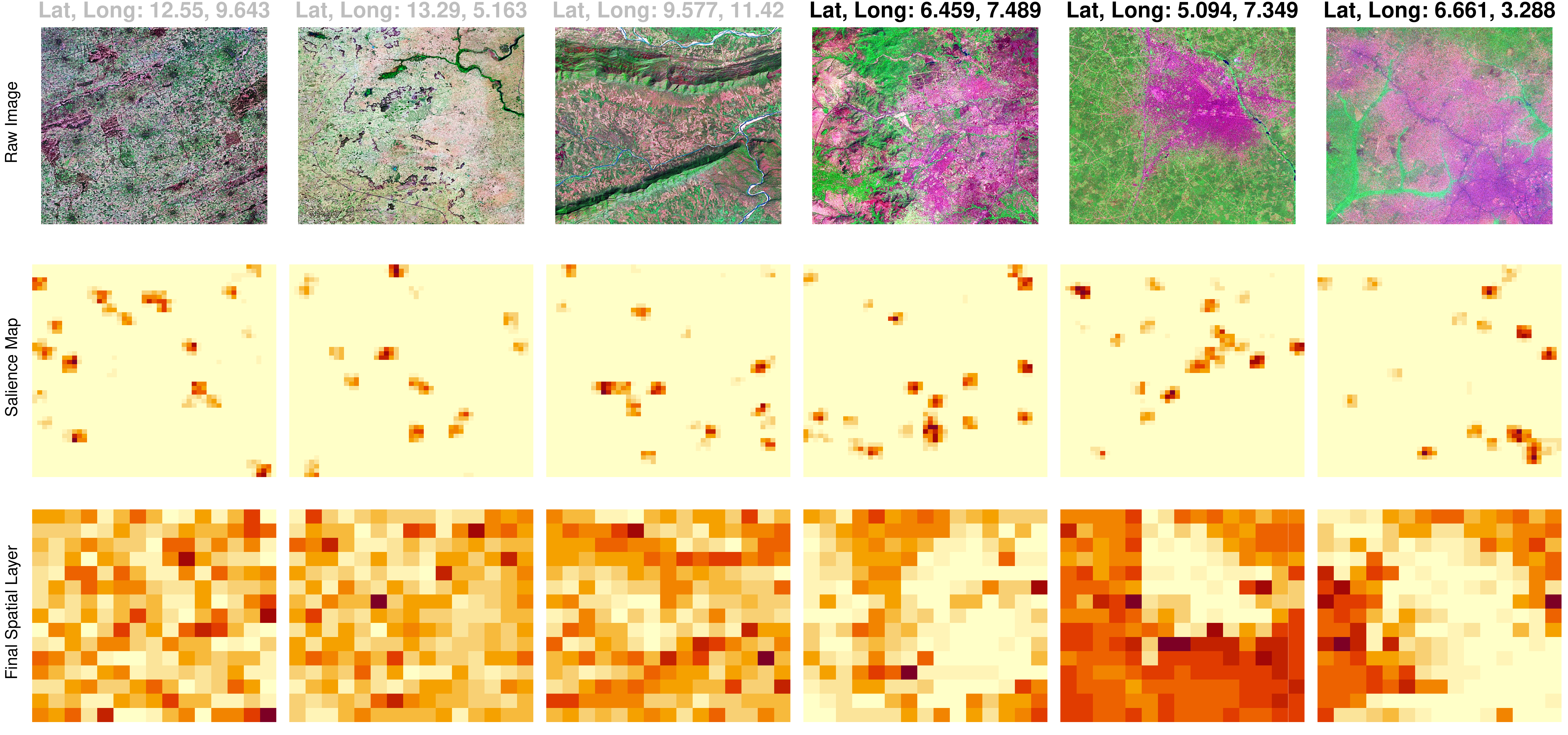}
    \caption{Replicating Figure \ref{fig:InsideConvnet} with an estimating kernel width of 3 instead of 7.}
\end{figure} 

\clearpage 

\subsection{Implementation Details}
We implement our computational analyses on an Apple M1 GPU using Metal-optimized TensorFlow 2.11 with GNU Parallel. Total compute time for the simulations is about 48 hours; total compute time for the application results is about 24 hours. 

\end{document}